\newtheorem{theorem}{Theorem}
\newtheorem{lemma}{Lemma}
\newtheorem{assumption}{Assumption}
\newtheorem{corollary}{Corollary}
\newtheorem{example}{Example}
\newcommand{\E}{\mathbb{E}}
\newcommand{\R}{\mathbb{R}}
\newcommand{\Var}{\text{Var}}
\newcommand\independent{\protect\mathpalette{\protect\independenT}{\perp}}
\def\independenT#1#2{\mathrel{\rlap{$#1#2$}\mkern2mu{#1#2}}}
\title{Optimization over Continuous and Multi-dimensional Decisions with Observational Data}
\author{
  Dimitris Bertsimas \\
  Sloan School of Management \\
  Massachusetts Institute of Technology \\
  Cambridge, MA 02142 \\
  \texttt{dbertsim@mit.edu} \\
  \And
  Christopher McCord \\
  Operations Research Center \\
  Massachusetts Institute of Technology \\
  Cambridge, MA 02142 \\
  \texttt{mccord@mit.edu} \\
}
\begin{document}

\maketitle

\begin{abstract}
  We consider the optimization of an uncertain objective over continuous and multi-dimensional decision spaces in problems in which we are only provided with observational data. We propose a novel algorithmic framework that is tractable, asymptotically consistent, and superior to comparable methods on example problems. Our approach leverages predictive machine learning methods and incorporates information on the uncertainty of the predicted outcomes for the purpose of prescribing decisions. We demonstrate the efficacy of our method on examples involving both synthetic and real data sets.
\end{abstract}

\section{Introduction}\label{sec:intro}
We study the general problem in which a decision maker seeks to optimize a known objective function that depends on an uncertain quantity. The uncertain quantity has an unknown distribution, which may be affected by the action chosen by the decision maker. Many important problems across a variety of fields fit into this framework. In healthcare, for example, a doctor aims to prescribe drugs in specific dosages to regulate a patient's vital signs. In revenue management, a store owner must decide how to price various products in order to maximize profit. In online retail, companies decide which products to display for a user to maximize sales. The general problem we study is characterized by the following components:
\begin{itemize}
\item Decision variable: $z \in \mathcal{Z}\subset\R^p$,
\item Outcome: $Y(z) \in \mathcal{Y}$ (We adopt the potential outcomes framework \citep{rosenbaum2002}, in which $Y(z)$ denotes the (random) quantity that would have been observed had decision $z$ been chosen.),
\item Auxiliary covariates (also called side-information or context): $x \in \mathcal{X} \subset \R^d$,
\item Cost function: $c(z;y) : \mathcal{Z} \times \mathcal{Y} \to \R$. (This function is known \textit{a priori}.)
\end{itemize}
We allow the auxiliary covariates, decision variable, and outcome to take values on multi-dimensional, continuous sets. A decision-maker seeks to determine the action that minimizes the conditional expected cost:
\begin{equation}\label{eq:master}
\min_{z\in\mathcal{Z}} \E[c(z;Y(z))| X=x].
\end{equation}

Of course, the distribution of $Y(z)$ is unknown, so it is not possible to solve this problem exactly. However, we assume that we have access to \emph{observational data}, consisting of $n$ independent and identically distributed observations, $(X_i,Z_i,Y_i)$ for $i=1,\ldots,n$. Each of these observations consists of an auxiliary covariate vector, a decision, and an observed outcome. This type of data presents two challenges that differentiate our problem from a predictive machine learning problem. First, it is incomplete. We only observe $Y_i := Y_i(Z_i)$, the outcome associated with the applied decision. We do not observe what the outcome would have been under a different decision. Second, the decisions were not necessarily chosen independently of the outcomes, as they would have been in a randomized experiment, and we do not know how the decisions were assigned. Following common practice in the causal inference literature, we make the ignorability assumption of \citet{hirano2004}.
\begin{assumption}[Ignorability]\label{as:ignorability}
\begin{equation*}
Y(z) \independent Z \mid X \;\;\;\forall z \in \mathcal{Z}
\end{equation*}
\end{assumption}
In other words, we assume that historically the decision $Z$ has been chosen as a function of the auxiliary covariates $X$. There were no unmeasured confounding variables that affected both the choice of decision and the outcome.

Under this assumption, we are able to rewrite the objective of (\ref{eq:master}) as 
\begin{equation*}
\E[c(z;Y)\mid X=x,Z=z].
\end{equation*}
This form of the objective is easier to learn because it depends only on the observed outcome, not on the counterfactual outcomes. A direct approach to solve this problem is to use a regression method to predict the cost as a function of $x$ and $z$ and then choose $z$ to minimize this predicted cost. If the selected  regression method is uniformly consistent in $z$, then the action chosen by this method will be asymptotically optimal under certain conditions. (We will formalize this later.) However, this requires choosing a regression method that ensures the optimization problem is tractable. For this work, we restrict our attention to linear and tree-based methods, such as CART \citep{breiman1984} and random forests \citep{breiman2001}, as they are both effective and tractable for many practical problems.

A key issue with the direct approach is that it tries to learn too much. It tries to learn the expected outcome under every possible decision, and the level of uncertainty associated with the predicted expected cost can vary between different decisions. This method can lead us to select a decision which has a small point estimate of the cost, but a large uncertainty interval.

\subsection{Notation}
Throughout the paper, we use capital letters to refer to random quantities and lower case letters to refer to deterministic quantities. Thus, we use $Z$ to refer to the decision randomly assigned by the (unknown) historical policy and $z$ to refer to a specific action. For a given, auxiliary covariate vector, $x$, and a proposed decision, $z$, the conditional expectation $\E[c(z;Y)|X=z,Z=z]$ means the expectation of the cost function $c(z;Y)$ under the conditional measure in which $X$ is fixed as $x$ and $Z$ is fixed as $z$. We ignore details of measurability throughout and assume this conditional expectation is well defined. Throughout, all norms are $\ell_2$ norms unless otherwise specified. We use $(X,Z)$ to denote vector concatenation.

\subsection{Related Work}
Recent years have seen tremendous interest in the area of data-driven optimization. Much of this work combines ideas from the statistics and machine learning literature with techniques from mathematical optimization. \citet{bertsimas2014} developed a framework that uses nonparametric machine learning methods to solve data-driven optimization problems in the presence of auxiliary covariates. They take advantage of the fact that for many machine learning algorithms, the predictions are given by a linear combination of the training samples' target variables. \citet{kao2009} and \citet{elmachtoub2017} developed algorithms that make predictions tailored for use in specific optimization problems. However, they all deal with the setting in which the decision does not affect the outcome. This is insufficient for many applications, such as pricing, in which the demand for a product is clearly affected by the price. \citet{bertsimaspower} later studied the limitations of predictive approaches to pricing problems. In particular, they demonstrated that confounding in the data between the decision and outcome can lead to large optimality gaps if ignored. They proposed a kernel-based method for data-driven optimization in this setting, but it does not scale well with the dimension of the decision space. \citet{misic2017} developed an efficient mixed integer optimization formulation for problems in which the predicted cost is given by a tree ensemble model. This approach scales fairly well with the dimension of the decision space but does not consider the need for uncertainty penalization.

Another relevant area of research is causal inference (see \citet{rosenbaum2002} for an overview), which concerns the study of causal effects from observational data. Much of the work in this area has focused on determining whether a treatment has a significant effect on the population as a whole. However, a growing body of work has focused on learning optimal, personalized treatments from observational data. \citet{athey2017} proposed an algorithm that achieves optimal (up to a constant factor) regret bounds in learning a treatment policy when there are two potential treatments. \citet{kallus2017} proposed an algorithm to efficiently learn a treatment policy when there is a finite set of potential treatments. \citet{prescriptivetrees} developed a tree-based algorithm that learns to personalize treatment assignments from observational data. It is based on the optimal trees machine learning method \citep{bertsimas2017} and has performed well in experiments. Considerably less attention has been paid to problems with a continuous decision space. \citet{hirano2004} introduced the problem of inference with a continuous treatment, and \citet{flores2005} studied the problem of learning an optimal policy in this setting. Recently, \citet{kallus2018} developed an approach to policy learning with a continuous decision variable that generalizes the idea of inverse propensity score weighting. Our approach differs in that we focus on regression-based methods, which we believe scale better with the dimension of the decision space and avoid the need for density estimation.

The idea of uncertainty penalization has been explored as an alternative to empirical risk minimization in statistical learning, starting with \citet{maurer2009}. \citet{swaminathan2015} applied uncertainty penalization to the offline bandit setting. Their setting is similar to the one we study. An agent seeks to minimize the prediction error of his/her decision, but only observes the loss associated with the selected decision. They assumed that the policy used in the training data is known, which allowed them to use inverse propensity weighting methods. In contrast, we assume ignorability, but not knowledge of the historical policy, and we allow for more complex decision spaces.

We note that our approach bears a superficial resemblance to the upper confidence bound (UCB) algorithms for multi-armed bandits (cf. \citet{bubeck2012}). These algorithms choose the action with the highest upper confidence bound on its predicted expected reward. Our approach, in contrast, chooses the action with the highest lower confidence bound on its predicted expected reward (or lowest upper confidence bound on predicted expected cost). The difference is that UCB algorithms choose actions with high upside to balance exploration and exploitation in the online bandit setting, whereas we work in the offline setting with a focus on solely exploitation.

\subsection{Contributions}
Our primary contribution is an algorithmic framework for observational data driven optimization that allows the decision variable to take values on continuous and multidimensional sets. We consider applications in personalized medicine, in which the decision is the dose of Warfarin to prescribe to a patient, and in pricing, in which the action is the list of prices for several products in a store.

\section{Approach}\label{sec:approach}
In this section, we introduce the uncertainty penalization approach for optimization with observational data. Recall that the observational data consists of $n$ i.i.d. observations, $(X_1,Z_1,Y_1),\ldots,(X_n,Z_n,Y_n)$. For observation $i$, $X_i$ represents the pertinent auxiliary covariates, $Z_i$ is the decision that was applied, and $Y_i$ is the observed response. The first step of the approach is to train a predictive machine learning model to estimate $\E[c(z;Y)|X=x,Z=z]$. When training the predictive model, the feature space is the cartesian product of the auxiliary covariate space and the decision space, $\mathcal{X}\times\mathcal{Z}$. We have several options for how to train the predictive model. We can train the model to predict $Y$, the cost $c(Z,Y)$, or a combination of these two responses. In general, we denote the prediction of the ML algorithm as a linear combination of the cost function evaluated at the training examples,
\begin{equation*}
\hat{\mu}(x,z) := \sum_{i=1}^n w_i(x,z) c(z;Y_i).
\end{equation*}
We require the predictive model to satisfy a generalization of the honesty property of \citet{wager2017}.

\begin{assumption}[Honesty]\label{as:honesty}
The model trained on $(X_1,Z_1,Y_1),\ldots,(X_n,Z_n,Y_n)$ is honest, i.e., the weights, $w_i(x,z)$, are determined independently of the outcomes, $Y_1,\ldots,Y_n$.
\end{assumption}
This honesty assumption reduces the bias of the predictions of the cost. We also enforce several restrictions on the weight functions.
\begin{assumption}[Weights]\label{as:weights}
For all $(x,z) \in \mathcal{X}\times\mathcal{Z}$, $\sum_{i=1}^n w_i(x,z) = 1$ and for all $i$, $w_i(x,z) \in [0,1/\gamma_n]$. In addition, $\mathcal{X}\times \mathcal{Z}$ can be partitioned into $\Gamma_n$ regions such that if $(x,z)$ and $(x,z')$ are in the same region, $||w(x,z)-w(x,z')||_1 \le \alpha||z-z'||_2$.
\end{assumption}

The direct approach to solving (\ref{eq:master}) amounts to choosing $z \in \mathcal{Z}$ that minimizes $\hat{\mu}(x,z)$, for each new instance of auxiliary covariates, $x$. However, the variance of the predicted cost, $\hat{\mu}(x,z)$, can vary with the decision variable, $z$. Especially with a small training sample size, the direct approach, minimizing $\hat{\mu}(x,z)$, can give a decision with a small, but highly uncertain, predicted cost. We can reduce the expected regret of our action by adding a penalty term for the variance of the selected decision. If Assumption \ref{as:honesty} holds, the conditional variance of $\hat{\mu}(x,z)$ given $(X_1,Z_1),\ldots,(X_n,Z_n)$ is given by
\begin{equation*}
V(x,z) := \sum_i w^2_i(x,z)\Var(c(z;Y_i)|X_i,Z_i).
\end{equation*}
In addition, $\hat{\mu}(x,z)$ may not be an unbiased predictor, so we also introduce a term that penalizes the conditional bias of the predicted cost given $(X_1,Z_1),\ldots,(X_n,Z_n)$. Since the true cost is unknown, it is not possible to exactly compute this bias. Instead, we compute an upper bound under a Lipschitz assumption (details in Section \ref{sec:theory}).
\begin{equation*}
B(x,z) := \sum_i w_i(x,z) ||(X_i,Z_i) - (x,z)||_2.
\end{equation*}
Overall, given a new vector of auxiliary covariates, $x\in\mathcal{X}$, our approach makes a decision by solving
\begin{align}\label{eq:empirical}
\min_{z\in\mathcal{Z}} \hat{\mu}(x,z) +\lambda_1 \sqrt{V(x,z)} + \lambda_2 B(x,z),
\end{align}
where $\lambda_1$ and $\lambda_2$ are tuning parameters.

As a concrete example, we can use the CART algorithm of \citet{breiman1984} or the optimal regression tree algorithm of \citet{bertsimas2017} as the predictive method. These algorithms work by partitioning the training examples into clusters, i.e., the leaves of the tree. For a new observation, a prediction of the response variable is made by averaging the responses of the training examples that are contained in the same leaf.
\begin{equation*}
w_i(x,z) = \begin{cases}
\frac{1}{N(x,z)}, & (x,z) \in l(x,z), \\
0, & \text{otherwise},
\end{cases}
\end{equation*}
where $l(x,z)$ denotes the set of training examples that are contained in the same leaf of the tree as $(x,z)$, and $N(x,z) = |l(x,z)|$. The variance term will be small when the leaf has a large number of training examples, and the bias term will be small when the diameter of the leaf is small. Assumption \ref{as:honesty} can be satisfied by ignoring the outcomes when selecting the splits or by dividing the training data into two sets, one for making splits and one for making predictions. Assumption \ref{as:weights} is satisfied with $\alpha=0$ if the minimum number of training samples in each leaf is $\gamma_n$ and the maximum number of leaves in the tree is $\Gamma_n$.

\subsection{Parameter Tuning}\label{sec:parametertuning}
Before proceeding, we note that the variance terms, $\Var(c(z;Y_i)\mid X_i,Z_i)$, are often unknown in practice. In the absence of further knowledge, we assume homoscedasticity, i.e., $\Var(Y_i|X_i,Z_i)$ is constant. It is possible to estimate this value by training a machine learning model to predict $Y_i$ as a function of $(X_i,Z_i)$ and computing the mean squared error on the training set. However, it may be advantageous to include this value with the tuning parameter $\lambda_1$.

We have several options for tuning parameters $\lambda_1$ and $\lambda_2$ (and whatever other parameters are associated with the predictive model). Because the counterfactual outcomes are unknown, it is not possible to use the standard approach of holding out a validation set during training and evaluating the error of the model on that validation set for each combination of possible parameters. One option is to tune the predictive model's parameters using cross validation to maximize predictive accuracy and then select $\lambda_1$ and $\lambda_2$ using the theory we present in Section \ref{sec:theory}. Another option is to split the data into a training and validation set and train a predictive model on the validation data to impute the counterfactual outcomes. We then select the model that minimizes the predicted cost on the validation set. For the examples in Section \ref{sec:experiments}, we use a combination of these two ideas. We train a random forest model on the validation set (in order to impute counterfactual outcomes), and we then select the model that minimizes the sum of the mean squared error and the predicted cost on the validation data. In the supplementary materials, we include computations that demonstrate, for the Warfarin example of Section \ref{sec:warfarin}, the method is not too sensitive to the choice of $\lambda_1$ and $\lambda_2$.

\section{Theory}\label{sec:theory}
In this section, we describe the theoretical motivation for our approach and provide finite-sample generalization and regret bounds. For notational convenience, we define 
\begin{equation*}
\mu(x,z) := \E[c(z;Y(z))|X=x] = \E[c(z;Y)| X=x,Z=z],
\end{equation*}
where the second equality follows from the ignorability assumption. Before presenting the results, we first present a few additional assumptions.

\begin{assumption}[Regularity]\label{as:regularity}
The set $\mathcal{X}\times\mathcal{Z}$ is nonempty, closed, and bounded with diameter $D$.
\end{assumption}

\begin{assumption}[Objective Conditions]\label{as:objective}
The objective function satisfies the following properties:
\begin{enumerate}
\item $|c(z;y)| \le 1\;\;\;\; \forall z,y$.
\item For all $y\in\mathcal{Y}$, $c(\cdot ;y)$ is $L$-Lipschitz.
\item For any $x,x'\in\mathcal{X}$ and any $z,z'\in\mathcal{Z}$, $|\mu(x,z) - \mu(x',z')| \le L ||(x,z) - (x',z')||$.
\end{enumerate}
\end{assumption}

These assumptions provide some conditions under which the generalization and regret bounds hold, but similar results hold under alternative sets of assumptions (e.g. if $c(z;Y) | Z$ is subexponential instead of bounded). With these additional assumptions, we have the following generalization bound. All proofs are contained in the supplementary materials.

\begin{theorem}\label{thm:generalization}
Suppose assumptions \ref{as:ignorability}-\ref{as:objective} hold. Then, with probability at least $1-\delta$,
\begin{align*}
\mu(x,z) - \hat{\mu}(x,z) &\le \frac{4}{3\gamma_n}\ln(K_n/\delta) + 2\sqrt{V(x,z)\ln(K_n/\delta)} + L\cdot B(x,z) \;\;\;\forall z\in\mathcal{Z},
\end{align*}
where $K_n = \Gamma_n\left(9D\gamma_n\left(\alpha(LD + 1 + \sqrt{2}) + L(\sqrt{2} + 3)\right)\right)^p$.
\end{theorem}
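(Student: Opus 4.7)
The plan is to decompose
\[
\mu(x,z) - \hat{\mu}(x,z) \;=\; \bigl[\mu(x,z) - M(x,z)\bigr] \;+\; \bigl[M(x,z) - \hat{\mu}(x,z)\bigr],
\]
where $M(x,z) := \E\bigl[\hat{\mu}(x,z) \mid (X_j,Z_j)_{j=1}^n\bigr]$. The first bracket is a deterministic conditional bias that I would control with the Lipschitz hypotheses of Assumption~\ref{as:objective}; the second is a mean-zero stochastic fluctuation that, thanks to the honesty property of Assumption~\ref{as:honesty}, decomposes into independent bounded terms and can be controlled by a one-sided Bernstein inequality. I would first establish a pointwise bound at a fixed $(x,z)$ and then lift it to a uniform-in-$z$ statement via a covering argument built from Assumption~\ref{as:weights}.

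For the bias, honesty gives $M(x,z) = \sum_i w_i(x,z)\,\E[c(z;Y_i)\mid X_i,Z_i]$. Ignorability lets me rewrite $\E[c(z;Y_i)\mid X_i,Z_i]$ as $\E[c(z;Y_i(z'))\mid X_i]$ evaluated at $z'=Z_i$. Comparing this quantity first to $\mu(X_i,Z_i) = \E[c(Z_i;Y_i(Z_i))\mid X_i]$ through the $L$-Lipschitz constant of $c(\cdot;y)$, and then comparing $\mu(X_i,Z_i)$ to $\mu(x,z)$ through the joint $L$-Lipschitz continuity of $\mu$, gives a sample-wise bound of order $L\,\|(X_i,Z_i)-(x,z)\|$. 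Because the weights sum to one, averaging against them produces a pointwise bias bound of order $L\cdot B(x,z)$.

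For the fluctuation, condition on the features $(X_j,Z_j)_{j=1}^n$. By Assumption~\ref{as:honesty} the weights are then deterministic, so the summands $w_i(x,z)\,c(z;Y_i)$ are independent; by Assumption~\ref{as:weights} each centered summand has modulus at most $2/\gamma_n$, and the conditional variances sum to exactly $V(x,z)$. The one-sided Bernstein inequality then yields, with conditional probability at least $1-\delta'$,
\[
M(x,z) - \hat{\mu}(x,z) \;\le\; \tfrac{4}{3\gamma_n}\ln(1/\delta') \,+\, 2\sqrt{V(x,z)\ln(1/\delta')}.
\]

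The final step is to make the bound uniform in $z$. I would partition $\mathcal{X}\times\mathcal{Z}$ into the $\Gamma_n$ regions guaranteed by Assumption~\ref{as:weights} and cover each region with an $r$-net in the $\ell_2$ metric, using the standard $(3D/r)^p$ covering-number bound. Inside a region, the $\alpha$-Lipschitz control on $w(x,\cdot)$, combined with boundedness and the Lipschitz property of $c(\cdot;y)$, transfers into Lipschitz estimates in $z$ for $\hat{\mu}$, for $\sqrt{V}$, and for $B$; choosing $r$ on the order of $1/\bigl(3\gamma_n(\alpha(LD+1+\sqrt{2})+L(\sqrt{2}+3))\bigr)$ is precisely what makes the net-induced slack in each of these three quantities absorbable into the Bernstein and bias terms. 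A union bound over the resulting $K_n$ net points, applied with $\delta'=\delta/K_n$, upgrades $\ln(1/\delta')$ to $\ln(K_n/\delta)$ with the stated $K_n$. The main obstacle is exactly this last bookkeeping: one must track simultaneously how the Lipschitz perturbations of $\hat{\mu}$, of $V$ (which enters under a square root), and of $B$ interact with the Bernstein constants, and calibrate $r$ so that the numerical factors inside $K_n$ come out as claimed.
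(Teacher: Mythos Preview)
Your proposal is correct and follows essentially the same route as the paper: bias control via Lipschitz continuity of $\mu$, a pointwise conditional Bernstein bound for the fluctuation, Lipschitz estimates for $\hat\mu$, $\sqrt{V}$, and $B$ within each of the $\Gamma_n$ regions (this is the paper's Lemma~1), and a covering-plus-union-bound argument with exactly the net radius you propose. The one mechanism the paper makes explicit and that you leave to ``bookkeeping'' is how the covering slack is absorbed: the paper starts from the sharper pointwise constants $\frac{2}{3\gamma_n}\ln(1/\delta)+\sqrt{2V\ln(1/\delta)}$, shows the net-induced error is at most $\frac{1}{3\gamma_n}\ln(1/\delta)$, absorbs it via the inequality $\Psi(\hat z_k,\delta)+\frac{1}{3\gamma_n}\ln(1/\delta)\le\Psi(\hat z_k,\sqrt{\delta})$, and only then substitutes $\delta\mapsto\delta^2/K_n^2$ to arrive at the stated $\frac{4}{3\gamma_n}\ln(K_n/\delta)+2\sqrt{V\ln(K_n/\delta)}$.
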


This result uniformly bounds, with high probability, the true cost of action $z$ by the predicted cost, $\hat{\mu}(x,z)$, a term depending on the uncertainty of that predicted cost, $V(x,z)$, and a term proportional to the bias associated with that predicted cost, $B(x,z)$. It is easy to see how this result motivates the approach described in (\ref{eq:empirical}). One can also verify that the generalization bound still holds if $(X_1,Z_1),\ldots,(X_n,Z_n)$ are chosen deterministically, as long as $Y_1,\ldots,Y_n$ are still independent. Using Theorem \ref{thm:generalization}, we are able to derive a finite-sample regret bound.
\begin{theorem}\label{thm:regret}
Suppose assumptions \ref{as:ignorability}-\ref{as:objective} hold. Define 
\begin{align*}
&z^* \in \arg\min\limits_z \mu(x,z),\\
&\hat{z} \in \arg\min\limits_z\hat{\mu}(x,z) + \lambda_1 \sqrt{V(x,z)} + \lambda_2 B(x,z).
\end{align*}
If $\lambda_1 = 2\sqrt{\ln(2K_n/\delta)}$ and $\lambda_2 = L$, then with probability at least $1-\delta$,
\begin{equation*}
\mu(x,\hat{z}) - \mu(x,z^*) \le \frac{2}{\gamma_n}\ln(2K_n/\delta) + 4\sqrt{V(x,z^*)\ln(2K_n/\delta)} + 2L\cdot B(x,z^*),
\end{equation*}
where $K_n = \Gamma_n\left(9D\gamma_n\left(\alpha(LD + 1 + \sqrt{2}) + L(\sqrt{2} + 3)\right)\right)^p$.
\end{theorem}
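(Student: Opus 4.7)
The strategy is to apply the generalization bound of Theorem \ref{thm:generalization} twice, once at $\hat{z}$ as an upper bound on $\mu(x,\hat{z})$ and once at $z^*$ as a lower bound on $\hat{\mu}(x,z^*)$, and then exploit the optimality of $\hat{z}$ in the penalized problem to chain the two estimates. Concretely, inspecting the statement of Theorem \ref{thm:generalization} one sees that the underlying concentration argument (Bernstein combined with a covering-plus-honesty argument over $K_n$ candidate weight patterns) is symmetric in $\mu-\hat\mu$, so the same proof yields both $\mu(x,z)-\hat\mu(x,z)\le \frac{4}{3\gamma_n}\ln(K_n/\delta') + 2\sqrt{V(x,z)\ln(K_n/\delta')} + L\,B(x,z)$ and the reverse inequality $\hat\mu(x,z)-\mu(x,z)\le \frac{4}{3\gamma_n}\ln(K_n/\delta') + 2\sqrt{V(x,z)\ln(K_n/\delta')} + L\,B(x,z)$, each with probability $1-\delta'$. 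Taking $\delta' = \delta/2$ and a union bound, both inequalities hold simultaneously with probability at least $1-\delta$, and the argument of the logarithm becomes $2K_n/\delta$.

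On this event, the upper direction at $z=\hat z$ gives
\begin{equation*}
\mu(x,\hat z) \;\le\; \hat\mu(x,\hat z) + 2\sqrt{V(x,\hat z)\ln(2K_n/\delta)} + L\cdot B(x,\hat z) + \tfrac{4}{3\gamma_n}\ln(2K_n/\delta),
\end{equation*}
and with the prescribed $\lambda_1=2\sqrt{\ln(2K_n/\delta)}$ and $\lambda_2=L$ the first three terms on the right are exactly the penalized objective $\hat\mu(x,\hat z)+\lambda_1\sqrt{V(x,\hat z)}+\lambda_2 B(x,\hat z)$ minimized by $\hat z$. By optimality, this quantity is at most $\hat\mu(x,z^*)+\lambda_1\sqrt{V(x,z^*)}+\lambda_2 B(x,z^*)$. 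Now I invoke the lower direction at $z=z^*$ to replace $\hat\mu(x,z^*)$ by $\mu(x,z^*)+2\sqrt{V(x,z^*)\ln(2K_n/\delta)}+L\cdot B(x,z^*)+\frac{4}{3\gamma_n}\ln(2K_n/\delta)$.

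Subtracting $\mu(x,z^*)$ from both sides and collecting the two Bernstein constants gives
\begin{equation*}
\mu(x,\hat z)-\mu(x,z^*) \;\le\; \tfrac{8}{3\gamma_n}\ln(2K_n/\delta) + 4\sqrt{V(x,z^*)\ln(2K_n/\delta)} + 2L\cdot B(x,z^*),
\end{equation*}
which, after absorbing $8/3$ into the stated leading constant $2$ (this is the only spot where I need to be slightly careful, and it may require tightening the two-sided Bernstein step so that one direction contributes only $\frac{2}{3\gamma_n}\ln(2K_n/\delta)$ rather than $\frac{4}{3\gamma_n}$, e.g.\ by keeping the Bernstein deviation $\sigma^2$ term attached once rather than doubled), matches the claimed regret.

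The main obstacle is purely bookkeeping: turning the one-sided Theorem \ref{thm:generalization} into a two-sided statement with the correct constants so that the sum of the two "bias" terms is exactly $\frac{2}{\gamma_n}\ln(2K_n/\delta)$, and verifying that the covering/discretization constant $K_n$ is unchanged (only $\delta$ shrinks to $\delta/2$). Everything else — the optimality step and the matching of $\lambda_1,\lambda_2$ to the generalization terms — is immediate from the construction of the penalties in \eqref{eq:empirical}.
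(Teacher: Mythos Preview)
Your overall strategy---upper-bound $\mu(x,\hat z)$ via the generalization bound, swap $\hat z$ for $z^*$ using optimality of the penalized objective, then lower-bound $\mu(x,z^*)$ by controlling $\hat\mu(x,z^*)-\mu(x,z^*)$---is exactly the paper's. The constant discrepancy you flag is not a bookkeeping issue to be ``absorbed''; it reflects a missing idea.

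The point you are missing is that $z^*$ is \emph{non-random}: it is defined through the population quantity $\mu(x,\cdot)$ and does not depend on $Y_1,\dots,Y_n$. Consequently, for the bound on $\hat\mu(x,z^*)-\mu(x,z^*)$ you do not need the uniform version of Theorem~\ref{thm:generalization} at all. You can apply the raw pointwise Bernstein inequality (the intermediate step~(\ref{eq:nonuniform}) in the proof of Theorem~\ref{thm:generalization}) directly at the single fixed point $z^*$, which yields
\[
\hat\mu(x,z^*)-\mu(x,z^*)\;\le\;\tfrac{2}{3\gamma_n}\ln(2/\delta)+\sqrt{2V(x,z^*)\ln(2/\delta)}+L\cdot B(x,z^*)
\]
with probability at least $1-\delta/2$, and then loosen $\ln(2/\delta)$ to $\ln(2K_n/\delta)$ so it matches $\lambda_1,\lambda_2$. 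The uniform bound (with its extra covering slack) is needed only at $\hat z$, because $\hat z$ is data-dependent. Summing $\tfrac{4}{3\gamma_n}$ from the $\hat z$ side and $\tfrac{2}{3\gamma_n}$ from the $z^*$ side gives exactly $\tfrac{2}{\gamma_n}$, with no fudging required.
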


By this result, the regret of the approach defined in (\ref{eq:empirical}) depends only on the variance and bias terms of the optimal action, $z^*$. Because the predicted cost is penalized by $V(x,z)$ and $B(x,z)$, it does not matter how poor the prediction of cost is at suboptimal actions. Theorem \ref{thm:regret} immediately implies the following asymptotic result, assuming the auxiliary feature space and decision space are fixed as the training sample size grows to infinity.

\begin{corollary}\label{cor:convergence}
In the setting of Theorem \ref{thm:regret}, if $\gamma_n = \Omega(n^{\beta})$ for some $\beta > 0$, $\Gamma_n = O(n)$, and $B(x,z^*) \to_p 0$ as $n\to\infty$, then
\begin{equation*}
\mu(x,\hat{z}) \to_p \mu(x,z^*)
\end{equation*}
as $n\to\infty$.
\end{corollary}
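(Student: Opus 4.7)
The plan is to apply Theorem \ref{thm:regret} directly and verify that each of the three terms on the right-hand side of its regret bound vanishes in probability as $n \to \infty$. Fix arbitrary $\epsilon, \delta > 0$. Theorem \ref{thm:regret} gives, with probability at least $1-\delta$,
\begin{equation*}
\mu(x,\hat{z}) - \mu(x,z^*) \le \frac{2}{\gamma_n}\ln(2K_n/\delta) + 4\sqrt{V(x,z^*)\ln(2K_n/\delta)} + 2L\cdot B(x,z^*),
\end{equation*}
with $K_n = \Gamma_n\bigl(9D\gamma_n(\alpha(LD+1+\sqrt{2}) + L(\sqrt{2}+3))\bigr)^p$. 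Since $p$, $D$, $L$, $\alpha$ are fixed and $\Gamma_n = O(n)$, $\gamma_n = \Omega(n^\beta)$, we get $\ln K_n = O(\ln n)$.

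First I would handle the deterministic bias-independent terms. The first term satisfies $\frac{2}{\gamma_n}\ln(2K_n/\delta) = O(\ln n / n^\beta) \to 0$. For the variance term, I bound $V(x,z^*)$ using Assumption \ref{as:weights} together with the boundedness of $c$. Since $|c(z;y)| \le 1$ implies $\Var(c(z;Y_i)\mid X_i,Z_i) \le 1$, and since $w_i(x,z^*) \in [0,1/\gamma_n]$ with $\sum_i w_i(x,z^*) = 1$, we have
\begin{equation*}
V(x,z^*) \le \sum_i w_i^2(x,z^*) \le \frac{1}{\gamma_n}\sum_i w_i(x,z^*) = \frac{1}{\gamma_n}.
\end{equation*}
Therefore $\sqrt{V(x,z^*)\ln(2K_n/\delta)} = O\bigl(\sqrt{\ln n / n^\beta}\bigr) \to 0$. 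The third term $2L \cdot B(x,z^*) \to_p 0$ by hypothesis.

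Combining the three terms, there exists a deterministic sequence $a_n \to 0$ and a stochastic sequence $b_n \to_p 0$ such that, on an event of probability at least $1-\delta$, $\mu(x,\hat{z}) - \mu(x,z^*) \le a_n + b_n$. For large enough $n$, $a_n < \epsilon/2$; meanwhile $\P(b_n > \epsilon/2) < \delta$ for $n$ sufficiently large. A union bound then gives $\P(\mu(x,\hat{z}) - \mu(x,z^*) > \epsilon) \le 2\delta$ for all sufficiently large $n$. Since $\delta$ was arbitrary, this establishes convergence in probability.

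There is no real obstacle here; the argument is essentially plugging assumptions into Theorem \ref{thm:regret}. The only mildly non-obvious step is the bound $V(x,z^*) \le 1/\gamma_n$, which leverages both the $\ell_\infty$ cap and the normalization in Assumption \ref{as:weights} together with the uniform boundedness of $c$ from Assumption \ref{as:objective}. Everything else is book-keeping on the logarithmic dependence of $K_n$ on $n$ through $\Gamma_n$ and $\gamma_n$.
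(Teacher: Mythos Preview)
Your proof is correct and follows essentially the same route as the paper: both arguments invoke Theorem~\ref{thm:regret}, use the key bound $V(x,z^*)\le 1/\gamma_n$ (obtained exactly as you do from $w_i\le 1/\gamma_n$, $\sum_i w_i=1$, and $|c|\le 1$), and then check that each of the three terms vanishes. The only cosmetic differences are that the paper inverts the bound to display an explicit exponential tail rate rather than fixing $\delta$, and it separates off the $2L\,B(x,z^*)$ term via Slutsky; your fixed-$\delta$ union-bound presentation is equally valid, though you are implicitly using $\mu(x,\hat z)\ge \mu(x,z^*)$ (from the definition of $z^*$) to handle the lower tail.
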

The assumptions can be satisfied, for example, with CART or random forest as the learning algorithm with parameters set in accordance with Lemma 2 of \citet{wager2017}. This next example demonstrates that there exist problems for which the regret of the uncertainty penalized method is strictly better, asymptotically, than the regret of predicted cost minimization.
\begin{example}\label{ex:asymptotic}
Suppose there are $m+1$ different actions and two possible, equally probable states of the world. In one state, action 0 has a cost that is deterministically 1, and all other actions have a random cost that is drawn from $\mathcal{N}(0,1)$ distribution. In the other state, action 0 has a cost that is deterministically 0, and all other actions have a random cost, drawn from a $\mathcal{N}(1,1)$ distribution. Suppose the training data consists of $m$ trials of each action. If $\hat{\mu}(j)$ is the empirical average cost of action $j$, then the predicted cost minimization algorithm selects the action that minimizes $\hat{\mu}(j)$. The uncertainty penalization algorithm adds a penalty of the form suggested by Theorem \ref{thm:regret}, $\lambda\sqrt{\frac{\sigma^2_j\ln m}{m}}$. If $\lambda \ge \sqrt{2}$, the (Bayesian) expected regret of the uncertainty penalization algorithm is asymptotically strictly less than the expected regret of the predicted cost minimization algorithm, $\E R^{UP} = o(\E R^{PCM})$, where the expectations are taken over both the training data and the unknown state of the world.
\end{example}
This example is simple but demonstrates that there exist settings in which predicted cost minimization is asymptotically suboptimal to the method we have described. In addition, the proof illustrates how one can construct tighter regret bounds than the one in Theorem \ref{thm:regret} for problems with specific structure.

\subsection{Tractability}\label{sec:implementation}
The tractability of (\ref{eq:empirical}) depends on the algorithm that is used as the predictive model. For many kernel-based methods, the resulting optimization problems are highly nonlinear and do not scale well when the dimension of the decision space is more than 2 or 3. For this reason, we advocate using tree-based and linear models as the predictive model. Tree based models partition the space $\mathcal{X}\times\mathcal{Z}$ into $\Gamma_n$ leaves, so there are only $\Gamma_n$ possible values of $w(x,z)$. Therefore, we can solve (\ref{eq:empirical}) separately for each leaf. For $j = 1,\ldots,\Gamma_n$, we solve
\begin{equation}\label{eq:implementation}
\begin{aligned}
&\min&& \hat{\mu}(x,z) + \lambda_1\sqrt{V(x,z)} + \lambda_2 B(x,z) \\
&\text{s.t.} && z \in \mathcal{Z}\\
&&& (x,z) \in L_j,
\end{aligned}
\end{equation}
where $L_j$ denotes the subset of $\mathcal{X}\times\mathcal{Z}$ that makes up leaf $j$ of the tree. Because each split in the tree is a hyperplane, $L_j$ is defined by an intersection of hyperplanes and thus is a polyhedral set. Clearly, $B(x,z)$ is a convex function in $z$, as it is a nonnegative linear combination of convex functions. If we assume homoscedasticity, then $V(x,z)$ is constant for all $(x,z)\in L_j$. If $c(z;y)$ is convex in $z$ and $\mathcal{Z}$ is a convex set, (\ref{eq:implementation}) is a convex optimization problem and can be solved by convex optimization techniques. Furthermore, since the $\Gamma_n$ instances of (\ref{eq:implementation}) are all independent, we can solve them in parallel. Once (\ref{eq:implementation}) has been solved for all leaves, we select the solution from the leaf with the overall minimal objective value.

For tree ensemble methods, such as random forest \citep{breiman2001} or xgboost \citep{chen2016}, optimization is more difficult. We compute optimal decisions using a coordinate descent heuristic. From a random starting action, we cycle through holding all decision variables fixed except for one and optimize that decision using discretization. We repeat this until convergence from several different random starting decisions. For linear predictive models, the resulting problem is often a second order conic optimization problem, which can be handled by off-the-shelf solvers (details given in the supplementary materials).

\section{Results}\label{sec:experiments}
In this section, we demonstrate the effectiveness of our approach with two examples. In the first, we consider pricing problem with synthetic data, while in the second, we use real patient data for personalized Warfarin dosing.

\subsection{Pricing}
In this example, the decision variable, $z\in\R^5$, is a vector of prices for a collection of products. The outcome, $Y$, is a vector of demands for those products. The auxiliary covariates may contain data on the weather and other exogenous factors that may affect demand. The objective is to select prices to maximize revenue for a given vector of auxiliary covariates. The demand for a single product is affected by the auxiliary covariates, the price of that product, and the price of one or more of the other products, but the mapping is unknown to the algorithm. The details on the data generation process can be found in the supplementary materials.

In Figure \ref{fig:pricing}, we compare the expected revenues of the strategies produced by several algorithms. CART, RF, and Lasso refer to the direct methods of training, respectively, a decision tree, a random forest, and a lasso regression \citep{tibshirani1996} to predict revenue, as a function of the auxiliary covariates and prices, and choosing prices, for each vector of auxiliary covariates in the test set, that maximize predicted revenue. (Note that the revenues for CART and Lasso were too small to be displayed on the plot. Unsurprisingly, the linear model performs poorly because revenue does not vary linearly with price. We restrict all prices to be at most 50 to ensure the optimization problems are bounded.) UP-CART, UP-RF, and UP-Lasso refer to the uncertainty penalized analogues in which the variance and bias terms are included in the objective. For each training sample size, $n$, we average our results over one hundred separate training sets of size $n$. At a training size of 2000, the uncertainty penalized random forest method improves expected revenue by an average of \$270 compared to the direct RF method. This improvement is statistically significant at the 0.05 significance level by the Wilcoxon signed-rank test ($p$-value $4.4\times 10^{-18}$, testing the null hypothesis that mean improvement is 0 across 100 different training sets).

\begin{figure}
\centering
\begin{subfigure}{.5\textwidth}
  \centering
  \includegraphics[width=\linewidth,trim={0 0cm 0 0cm},clip]{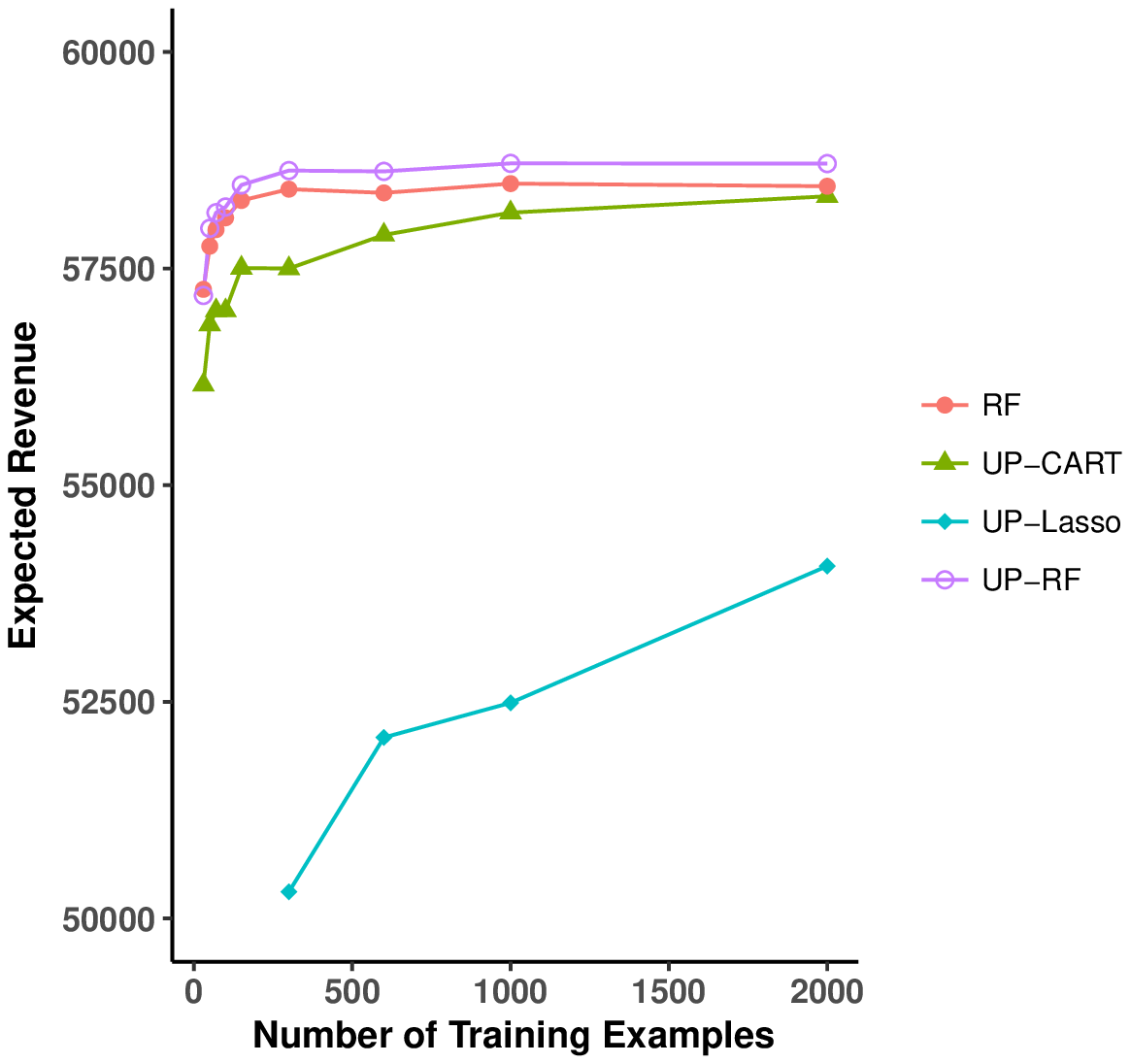}
    \caption{Pricing example.}
    \label{fig:pricing}
\end{subfigure}%
\begin{subfigure}{.5\textwidth}
  \centering
  \includegraphics[width=\linewidth,trim={0 0cm 0 0cm},clip]{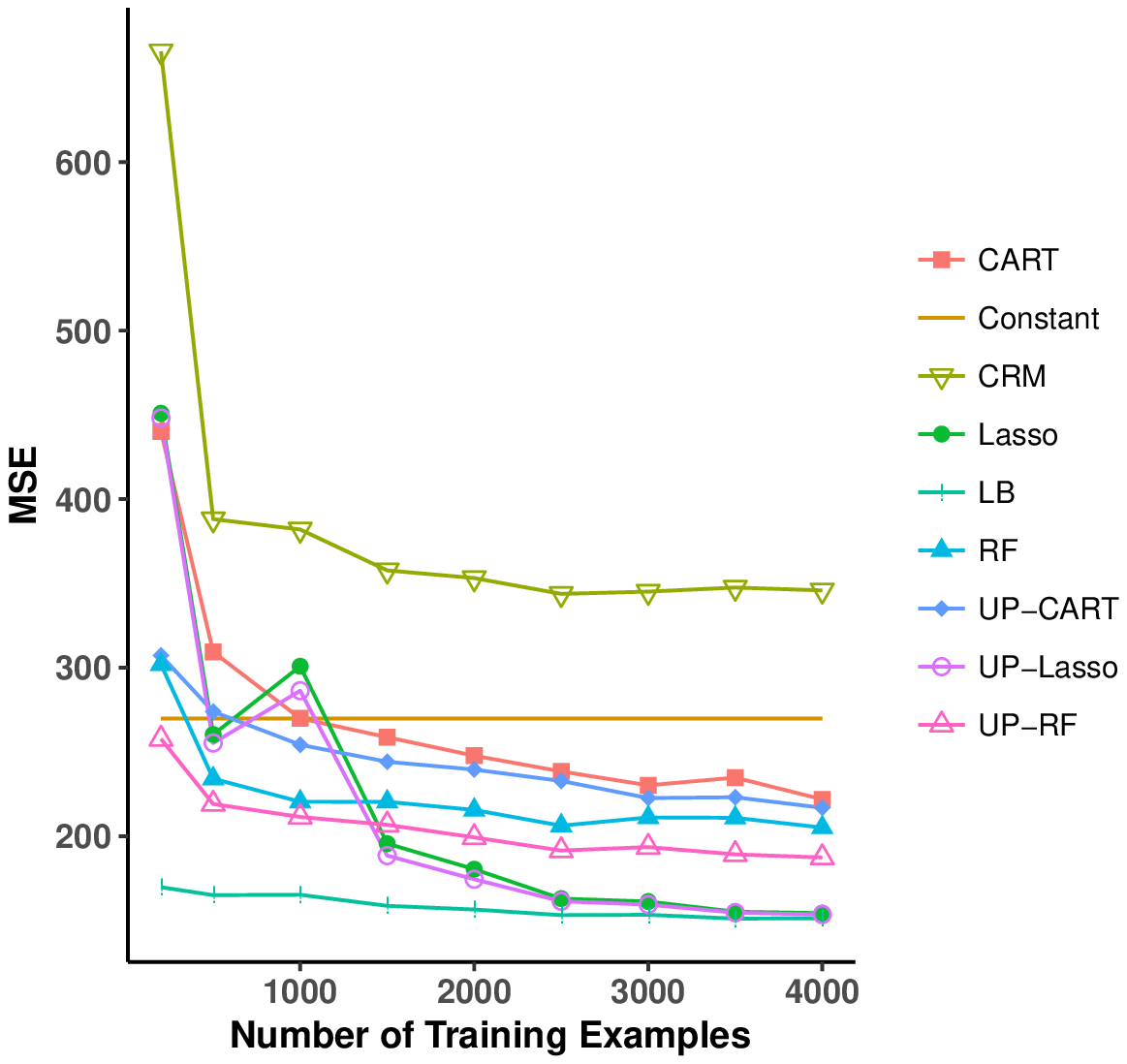}
    \caption{Warfarin example.}
    \label{fig:warfarin_vary_size}
\end{subfigure}
\caption{}
\label{fig:test}
\end{figure}

\subsection{Warfarin Dosing}\label{sec:warfarin}
Warfarin is a commonly prescribed anticoagulant that is used to treat patients who have had blood clots or who have a high risk of stroke. Determining the optimal maintenance dose of Warfarin presents a challenge as the appropriate dose varies significantly from patient to patient and is potentially affected by many factors including age, gender, weight, health history, and genetics. However, this is a crucial task because a dose that is too low or too high can put the patient at risk for clotting or bleeding. The effect of a Warfarin dose on a patient is measured by the International Normalilzed Ratio (INR). Physicians typically aim for patients to have an INR in a target range of 2-3.

In this example, we test the efficacy of our approach in learning optimal Warfarin dosing with data from \citet{consortium2009}. This publicly available data set contains the optimal stable dose, found by experimentation, for a diverse set of 5410 patients. In addition, the data set contains a variety of covariates for each patient, including demographic information, reason for treatment, medical history, current medications, and the genotype variant at CYP2C9 and VKORC1. It is unique because it contains the optimal dose for each patient, permitting the use of off-the-shelf machine learning methods to predict this optimal dose as a function of patient covariates. We instead use this data to construct a problem with observational data, which resembles the common problem practitioners face. Our access to the true optimal dose for each patient allows us to evaluate the performance of our method out-of-sample. This is a commonly used technique, and the resulting data set is sometimes called \emph{semi-synthetic}. Several researchers have used the Warfarin data for developing personalized approaches to medical treatments. In particular, \citet{kallus2017recursive} and \citet{prescriptivetrees} tested algorithms that learned to treat patients from semi-synthetic observational data. However, they both discretized the dosage into three categories, whereas we treat the dosage as a continuous decision variable.

To begin, we split the data into a training set of 4000 patients and a test set of 1410 patients. We keep this split fixed throughout all of our experiments to prevent cheating by using insights gained by visualization and exploration on the training set. Similar to \citet{kallus2017recursive}, we assume physicians prescribe Warfarin as a function of BMI. We assume the response that the physicians observe is related to the difference between the dose a patient was given and the true optimal dose for that patient. It is a noisy observation, but it, on average, gives directional information (whether the dose was too high or too low) and information on the magnitude of the distance from the optimal dose. The precise details of how we generate the data are given in the supplementary materials. For all methods, we repeat our work across 100 randomizations of assigned training doses and responses. To measure the performance of our methods, we compute, on the test set, the mean squared error (MSE) of the prescribed doses relative to the true optimal doses. Using the notation described in Section \ref{sec:intro}, $X_i \in \R^{99}$ represents the auxiliary covariates for patient $i$. We work in normalized units so the covariates all contribute equally to the bias penalty term. $Z_i \in \R$ represents the assigned dose for patient $i$, and $Y_i \in \R$ represents the observed response for patient $i$. The objective in this problem is to minimize $(\E[Y(z)|X=x])^2$ with respect to the dose, $z$.\footnote{This objective differs slightly from the setting described in Section \ref{sec:theory} in which the objective was to minimize the conditional expectation of a cost function. However, it is straightforward to modify the results to obtain the same regret bound (save a few constant factors) when minimizing $g(\E[c(z;Y(z))|X=x])$ for a Lipschitz function, $g$.}

Figure \ref{fig:warfarin_vary_size} displays the results of several algorithms as a function of the number of training examples. We compare CART, without any penalization, to CART with uncertainty penalization (UP-CART), and we see that uncertainty penalization offers a consistent improvement. This improvement is greatest when the training sample size is smallest. (Note: for CART with no penalization, when multiple doses give the same optimal predicted response, we select the mean.) Similarly, when we compare the random forest and Lasso methods with their uncertainty-penalizing analogues, we again see consistent improvements in MSE. The ``Constant'' line in the plot measures the performance of a baseline heuristic that assigns a fixed dose of 35 mg/week to all patients. The ``LB'' line provides an unattainable lower bound on the performance of all methods that use the observational data. For this method, we train a random forest to predict the optimal dose as a function of the patient covariates. We also compare our methods with the Counterfactual Risk Minimization (CRM) method of \citet{swaminathan2015}. We allow their method access to the true propensity scores that generated the data and optimize over all regularized linear policies for which the proposed dose is a linear function of the auxiliary covariates. We tried multiple combinations of tuning parameters, but the method always performed poorly out-of-sample. We suspect this is due to the size of the policy space. Our lasso based method works best on this data set when the number of training samples is large, but the random forest based method is best for smaller sample sizes. With the maximal training set size of 4000, the improvements of the CART, random forest, and lasso uncertainty penalized methods over their unpenalized analogues (2.2\%, 8.6\%, 0.5\% respectively) are all statistically significant at the 0.05 family-wise error rate level by the Wilcoxon signed-rank test with Bonferroni correction (adjusted $p$-values $2.1\times 10^{-4},4.3\times 10^{-16},1.2\times 10^{-6}$ respectively).

\section{Conclusions}
In this paper, we introduced a data-driven framework that combines ideas from predictive machine learning and causal inference to optimize an uncertain objective using observational data. Unlike most existing algorithms, our approach handles continuous and multi-dimensional decision variables by introducing terms that penalize the uncertainty associated with the predicted costs. We proved finite sample generalization and regret bounds and provided a sufficient set of conditions under which the resulting decisions are asymptotically optimal. We demonstrated, both theoretically and with real-world examples, the tractability of the approach and the benefit of the approach over unpenalized predicted cost minimization.

\medskip

\small
\bibliographystyle{ormsv080}
\bibliography{references} 

\appendix
\section{Proofs}\label{ap:proofs}
To begin, we prove the following lemma.
\begin{lemma}\label{lemma:lipschitz}
Suppose assumptions 1-5 hold. If $(x,z)$ and $(x,z')$ are in the same partition of $\mathcal{X}\times\mathcal{Z}$, as specified by assumption 3, then
\begin{equation*}
\left|\Psi(z,\delta) - \Psi(z',\delta)\right| \le \left(\alpha(LD + 1 + \sqrt{2\lambda_{\max}\ln 1/\delta}) + L(\sqrt{2\ln 1/\delta} + 3)\right)||z-z'||,
\end{equation*}
where $\Psi(z,\delta) = \mu(x,z) - \hat{\mu}(x,z) - \frac{2}{3\gamma_n}\ln(1/\delta) - \sqrt{2V(x,z)\ln(1/\delta)} - L\cdot B(x,z)$.
\end{lemma}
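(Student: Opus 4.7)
The plan is to apply the triangle inequality term by term to $\Psi(z,\delta)-\Psi(z',\delta)$. The constant summand $\tfrac{2}{3\gamma_n}\ln(1/\delta)$ is independent of $z$ and cancels, so it remains to bound the four $z$-dependent pieces involving $\mu$, $\hat{\mu}$, $\sqrt{V}$, and $B$, each linearly in $\|z-z'\|$, and then collect the coefficients.

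The $\mu$ piece is immediate from Assumption 5(3), yielding $L\|z-z'\|$. For the empirical-mean piece, I would use the add-subtract decomposition
\[
\hat{\mu}(x,z) - \hat{\mu}(x,z') = \sum_i w_i(x,z)\bigl(c(z;Y_i) - c(z';Y_i)\bigr) + \sum_i \bigl(w_i(x,z) - w_i(x,z')\bigr)\, c(z';Y_i),
\]
bounding the first sum by $L\|z-z'\|$ via the Lipschitzness of $c(\cdot;y)$ (Assumption 5(2)) together with $\sum_i w_i(x,z)=1$, and the second by $\alpha\|z-z'\|$ via H\"older's inequality, $|c|\le 1$ (Assumption 5(1)), and the weight Lipschitz property $\|w(x,z)-w(x,z')\|_1 \le \alpha\|z-z'\|$ from Assumption 3. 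For the $B$ piece, adding and subtracting $\sum_i w_i(x,z')\|(X_i,Z_i)-(x,z)\|$ and invoking the reverse triangle inequality $\bigl|\|(X_i,Z_i)-(x,z)\| - \|(X_i,Z_i)-(x,z')\|\bigr| \le \|z-z'\|$ together with the diameter bound $D$ from Assumption 4 yields $|B(x,z)-B(x,z')| \le (1 + \alpha D)\|z-z'\|$, contributing $L(1+\alpha D)\|z-z'\|$ after the factor of $L$ in $\Psi$.

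The main obstacle is the $\sqrt{2V(x,z)\ln(1/\delta)}$ piece, because $V$ is quadratic in the weights and the per-sample variances $\sigma_i^2(z) := \Var(c(z;Y_i)\mid X_i,Z_i)$ themselves depend on $z$, so a naive pass from $|V(x,z)-V(x,z')|$ to $|\sqrt{V(x,z)}-\sqrt{V(x,z')}|$ via $|\sqrt{a}-\sqrt{b}| \le \sqrt{|a-b|}$ would produce only a $\sqrt{\|z-z'\|}$ rate and break the Lipschitz conclusion. My plan is to exploit the identity $\sqrt{V(x,z)} = \|u(z)\|_2$ where $u_i(z) = w_i(x,z)\sigma_i(z)$; the reverse triangle inequality for norms then gives
\[
\bigl|\sqrt{V(x,z)} - \sqrt{V(x,z')}\bigr| \le \|u(z)-u(z')\|_2 \le \|u(z)-u(z')\|_1,
\]
after which a further add-subtract on $u_i(z)-u_i(z')$ produces a weight-change term bounded by $\alpha\sqrt{\lambda_{\max}}\|z-z'\|$ (using $\sigma_i\le\sqrt{\lambda_{\max}}$ and Assumption 3) and a variance-change term bounded by $L\|z-z'\|$ via the Lipschitzness of $z\mapsto c(z;Y_i)$ combined with $|c|\le 1$. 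Multiplying by $\sqrt{2\ln(1/\delta)}$ supplies the $\alpha\sqrt{2\lambda_{\max}\ln(1/\delta)}$ and $L\sqrt{2\ln(1/\delta)}$ summands in the stated coefficient, and summing all four contributions gives $\alpha(LD + 1 + \sqrt{2\lambda_{\max}\ln(1/\delta)}) + L(\sqrt{2\ln(1/\delta)} + 3)$, as claimed.
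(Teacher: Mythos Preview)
Your proposal is correct and follows essentially the same term-by-term triangle-inequality strategy as the paper, with the same add--subtract decompositions for $\hat\mu$, $B$, and $\sqrt{V}$. The only cosmetic difference is in the variance piece: you write $\sqrt{V(x,z)} = \|u(z)\|_2$ with $u_i = w_i\sigma_i$ and apply one reverse triangle inequality followed by an $\ell_1$ bound, whereas the paper splits via the seminorm $\|\cdot\|_{\Sigma(z')}$ before invoking the same $L^2$-norm argument for $|\sigma_i(z)-\sigma_i(z')|\le L\|z-z'\|$; both routes yield the same constants (your $\sqrt{\lambda_{\max}}$ factor matches the stated lemma, while the paper's own proof tacitly uses $\lambda_{\max}\le 1$ from $|c|\le 1$).
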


\begin{proof}
We first note $|\mu(x,z)-\mu(x,z')| \le L||z-z'||$ by the Lipschitz assumption on $c(z;y)$.

Next, since $(x,z)$ and $(x,z')$ are contained in the same partition,
\begin{align*}
|\hat{\mu}(x,z) - \hat{\mu}(x,z')| &= \left| \sum_i w_i(x,z)c(z;Y_i) - w_i(x,z')c(z';Y_i) \right| \\
&\le \left| \sum_i w_i(x,z)c(z;Y_i) - w_i(x,z)c(z';Y_i) \right| \\
&\;\;\;\; + \left| \sum_i w_i(x,z)c(z';Y_i) - w_i(x,z')c(z';Y_i) \right| \\
&\le L ||z-z'|| + ||w(x,z)-w(x,z')||_1\\
&\le (L + \alpha) ||z-z'||,
\end{align*}
where we have used Holder's inequality, the uniform bound on $c$, and Assumption 3.

Similarly, for the bias term,
\begin{align*}
|LB(x,z) - LB(x,z')| &= L\left|\sum_i w_i(x,z)||(X_i,Z_i) - (x,z)|| - w_i(x,z')||(X_i,Z_i) - (x,z')||\right| \\
&\le L\left|\sum_i w_i(x,z)||(X_i,Z_i) - (x,z)|| - w_i(x,z)||(X_i,Z_i) - (x,z')||\right| \\
&\;\;\;\; + L\left|\sum_i w_i(x,z)||(X_i,Z_i) - (x,z')|| - w_i(x,z')||(X_i,Z_i) - (x,z')||\right| \\
&\le L \sum_i w_i(x,z) \left|||(X_i,Z_i) - (x,z)|| - ||(X_i,Z_i) - (x,z')||\right| \\
&\;\;\;\; + L||w(x,z) - w(x,z')||_1 \sup_i ||(X_i,Z_i)- (x,z)|| \\
&\le (L + L\alpha D)||z-z'||.
\end{align*}

Next, we consider variance term. We let $\Sigma(z)$ denote the diagonal matrix with $\Var(c(z;Y_i)|X_i,Z_i)$ for $i=1,\ldots,n$ as entries. As before,
\begin{align*}
|\sqrt{V(x,z)}-\sqrt{V(x,z')}| &= \left|\sqrt{\sum_i w_i^2(x,z)\Var(c(z;Y_i)|X_i,Z_i)} - \sqrt{\sum_i w_i^2(x,z')\Var(c(z';Y_i)|X_i,Z_i)}\right| \\
&\le  \left|\sqrt{\sum_i w_i^2(x,z)\Var(c(z;Y_i)|X_i,Z_i)} - \sqrt{\sum_i w_i^2(x,z)\Var(c(z';Y_i)|X_i,Z_i)}\right| \\
&\;\;\;\; +  \left|\sqrt{\sum_i w_i^2(x,z)\Var(c(z';Y_i)|X_i,Z_i)} - \sqrt{\sum_i w_i^2(x,z')\Var(c(z';Y_i)|X_i,Z_i)}\right| \\
&= \left|\sqrt{w(x,z)^T\Sigma(z)w(x,z)} - \sqrt{w(x,z)^T\Sigma(z')w(x,z)}\right| \\
&\;\;\;\; + \left|||w(x,z)||_{\Sigma(z')} - ||w(x,z')||_{\Sigma(z')}\right|,
\end{align*}
where $||v||_{\Sigma} = \sqrt{v^T\Sigma v}$. One can verify that, because $\Sigma$ is positive semidefinite, $||\cdot||_\Sigma$ is seminorm that satisfies the triangle inequality. Therefore, we can upper bound the latter term by
\begin{align*}
\sqrt{(w(x,z) - w(x,z'))^T\Sigma(w(x,z) - w(x,z'))} &\le||w(x,z)-w(x,z')|| \\
&\le ||w(x,z)-w(x,z')||_1 \\
&\le \alpha||z-z'||,
\end{align*}
where we have used the assumption that $|c(z;y)| \le 1$.

The former term can again be upper bounded by the triangle inequality.
\begin{align}\label{eq:varlipbound}
&\left|\sqrt{\sum_i w_i^2(x,z)\Var(c(z;Y_i)|X_i,Z_i)} - \sqrt{\sum_i w_i^2(x,z)\Var(c(z';Y_i)|X_i,Z_i)}\right| \nonumber \\
&\le \sqrt{\sum_i w^2_i(x,z) (\sqrt{\Var(c(z;Y_i)|X_i,Z_i)} - \sqrt{\Var(c(z';Y_i)|X_i,Z_i)})^2}
\end{align}
Noting that $\sqrt{\Var(c(z;Y_i))} = ||c(z;Y_i)-\E[c(z;Y_i)]||_{L_2}$ (dropping conditioning for notational convenience), we can apply the triangle inequality to the $L_2$ norm:
\begin{align*}
&\left(||c(z;Y_i)-\E[c(z;Y_i)]||_{L_2} - ||c(z';Y_i)-\E[c(z';Y_i)]||_{L_2}\right)^2 \\
&\le ||c(z;Y_i) - c(z';Y_i) - \E[c(z;Y_i)-c(z';Y_i)]||_{L_2}^2 \\
&\le \E[(c(z;Y_i) - c(z';Y_i))^2] \\
&\le L^2 ||z-z'||^2.
\end{align*}
Therefore, we can upperbound (\ref{eq:varlipbound}) by
\begin{align*}
&\sqrt{\sum_i w^2_i(x,z) L^2 ||z-z'||^2} \\
&\le \sum_i w_i(x,z) L ||z-z'|| = L||z-z'||,
\end{align*}
where we have used the concavity of the square root function. Therefore,
\begin{equation*}
|\sqrt{V(x,z)}-\sqrt{V(x,z')}| \le (\alpha + L) ||z-z'||.
\end{equation*}
Combining the three results with the triangle inequality yields the desired result.
\end{proof}

\begin{proof}[Proof of Theorem 1]
To derive a regret bound, we first restrict our attention to the fixed design setting. Here, we condition on $X_1,Z_1,\ldots,X_n,Z_n$ and bound $\hat{\mu}(x,z)$ around its expectation. To simplify notation, we write $X$ to denote $(X_1,\ldots,X_n)$ and $Z$ to denote $(Z_1,\ldots,Z_n)$. Note that by the honesty assumption, in this setting, $\hat{\mu}$ is a simple sum of independent random variables. Applying Bernstein's inequality (see, for example, \citet{boucheron2013}), we have, for $\delta \in (0,1)$,
\begin{equation*}
P\left(\E[\hat{\mu}(x,z)\mid X,Z] - \hat{\mu}(x,z) \le \frac{2}{3\gamma_n}\ln(1/\delta) + \sqrt{2V(x,z)\ln(1/\delta)} \bigg| X,Z\right) \ge 1-\delta.
\end{equation*}
Next, we need to bound the difference between $\E[\hat{\mu}(x,z)|X,Z]$ and $\mu(x,z)$. By the honesty assumption, Jensen's inequality, and the Lipschitz assumption, we have
\begin{align*}
|\E[\hat{\mu}(x,z)\mid X,Z] - \mu(x,z)| &= \left|\sum_iw_i(x,z)(\mu(X_i,Z_i) - \mu(x,z)) \right| \\
&\le \sum_iw_i(x,z)|\mu(X_i,Z_i) - \mu(x,z)| \\
&\le L\sum_iw_i(x,z)||(X_i,Z_i) - (x,z)||\\
&= L\cdot B(x,z).
\end{align*}
Combining this with the previous result, we have, with probability at least $1-\delta$ (conditioned on $X$ and $Z$),
\begin{align}\label{eq:nonuniform}
\mu(x,z) - \hat{\mu}(x,z) &\le \frac{2}{3\gamma_n}\ln(1/\delta) + \sqrt{2V(x,z)\ln(1/\delta)} + L\cdot B(x,z) 
\end{align}

Next, we extend this result to hold uniformly over all $z\in \mathcal{Z}$. To do so, we partition $\mathcal{X}\times\mathcal{Z}$ into $\Gamma_n$ regions as in Assumption 3. For each region, we construct a $\nu$-net. Therefore, we have a set $\{\hat{z}_1,\ldots,\hat{z}_{K_n}\}$ such that for any $z\in\mathcal{Z}$, there exists a $\hat{z}_k$ such that $(x,z)$ and $(x,\hat{z}_k)$ are contained in the same region with $||z - \hat{z}_k|| \le \nu$. For ease of notation, let $k: \mathcal{Z} \to \{1,\ldots,K_n\}$ return an index that satisfies these criteria. By assumption, $\mathcal{Z} \subset \R^p$ has finite diameter, $D$, so we can construct this set with $K_n \le \Gamma_n (3D/\nu)^p$ (e.g., \citet[pg. 337]{shalev2014}).

By Lemma \ref{lemma:lipschitz} (and using the notation therein), we have
\begin{equation*}
\Psi(z,\delta) \le \Psi(\hat{z}_{k(z)},\delta) + \nu\left(\alpha(LD + 1 + \sqrt{2\ln 1/\delta}) + L(\sqrt{2\ln 1/\delta} + 3)\right).
\end{equation*}
Taking the supremum over $z$ of both sides, we get
\begin{equation*}
\sup_z \Psi(z,\delta) \le \max_k \Psi(\hat{z}_{k},\delta) + \nu\left(\alpha(LD + 1 + \sqrt{2\ln 1/\delta}) + L(\sqrt{2\ln 1/\delta} + 3)\right).
\end{equation*}
If we let $\nu = \frac{1}{3\gamma_n}\left(\alpha(LD + 1 + \sqrt{2}) + L(\sqrt{2} + 3)\right)^{-1}$, we have
\begin{align*}
&P(\sup_z \Psi(z,\delta) > 0 | X,Z)\\
&\le P\left(\max_k \Psi(\hat{z}_{k},\delta) + \nu\left(\alpha(LD + 1 + \sqrt{2\ln 1/\delta}) + L(\sqrt{2\ln 1/\delta} + 3)\right) > 0\bigg| X,Z\right) \\
&\le P\left(\max_k \Psi(\hat{z}_{k},\delta) + \nu\left(\alpha(LD + 1 + \sqrt{2}) + L(\sqrt{2} + 3)\right)\ln1/\delta > 0\bigg|X,Z\right) \\
&\le \sum_k P\left(\Psi(\hat{z}_{k},\delta) + \frac{\ln1/\delta}{3\gamma_n} > 0\bigg| X,Z\right) \\
&\le \sum_k P\left(\Psi(\hat{z}_k,\sqrt{\delta}) > 0\bigg|X,Z\right) \\
&\le K_n\sqrt{\delta},
\end{align*}
where we have used the union bound and (\ref{eq:nonuniform}). Replacing $\delta$ with $\delta^2/K_n^2$ and integrating both sides to remove the conditioning completes the proof.
\end{proof}

\begin{proof}[Proof of Theorem 2]
By Theorem 1, with probability at least $1-\delta/2$,
\begin{align*}
\mu(x,\hat{z}) &\le \hat{\mu}(x,\hat{z}) + \frac{4}{3\gamma_n}\ln(2K_n/\delta) + \lambda_1\sqrt{V(x,\hat{z})} + \lambda_2 B(x,\hat{z}) \\
&\le  \hat{\mu}(x,z^*) + \frac{4}{3\gamma_n}\ln(2K_n/\delta) + \lambda_1\sqrt{V(x,z^*)} + \lambda_2 B(x,z^*) ,
\end{align*}
where the second inequality follows from the definition of $\hat{z}$. Using the same argument we used to derive (\ref{eq:nonuniform}), since $z^*$ is not a random quantity, we have, with probability at least $1-\delta/2$,
\begin{align*}
\hat{\mu}(x,z^*) - \mu(x,z^*) &\le \frac{2}{3\gamma_n}\ln(2/\delta) + \sqrt{2V(x,z^*)\ln(2/\delta)} + L\cdot B(x,z^*) \\
&\le \frac{2}{3\gamma_n}\ln(2K_n/\delta) + \lambda_1\sqrt{V(x,z^*)} + \lambda_2 B(x,z^*).
\end{align*}
Combining the two inequalities with the union bound yields the desired result.
\end{proof}

\begin{proof}[Proof of Corollary 1]
We show $\mu(x,\hat{z}) - 2LB(x,z^*) \to_p \mu(x,z^*)$. The desired result follows from the assumption regarding $B(x,z^*)$ and Slutsky's theorem. First, we note, due to the assumption $|c(z;y)| \le 1$,
\begin{align*}
V(x,z^*) = \sum_i w_i(x,z^*) \Var(c(z^*;Y_i)|X_i,Z_i) \le \frac{1}{\gamma_n} \sum_i w_i(x,z^*) = \frac{1}{\gamma_n}.
\end{align*}
We have, for any $\epsilon > 0$,
\begin{align*}
&P(|\mu(x,\hat{z}) - 2LB(x,z^*) - \mu(x,z^*)| > \epsilon) \\
&\le P(\mu(x,\hat{z}) - 2LB(x,z^*) - \mu(x,z^*) > \epsilon/2) \\
&\;\;\;\;+ P(\mu(x,z^*) - \mu(x,\hat{z}) + 2LB(x,z^*) > \epsilon/2).
\end{align*}
By Theorem 2, for large enough $n$, the first term is upper bounded by
\begin{align*}
&2K_n\exp\left(-\frac{\epsilon^2}{4(2/\gamma_n + 4\sqrt{V(x,z^*)})^2}\right) \\
&\le 2K_n\exp\left(-\frac{\epsilon^2}{4(2/\sqrt{\gamma_n} + 4/\sqrt{\gamma_n})^2}\right) \\
&= 2 \Gamma_n\left(9D\gamma_n\left(\alpha(LD + 1 + \sqrt{2}) + L(\sqrt{2} + 3)\right)\right)^p \exp\left(-\frac{\gamma_n\epsilon^2}{144}\right) \\
&\le C_1n^{1+\beta}\exp(-C_2n^\beta) \to 0.
\end{align*}
Because $\mu(x,z^*) \le \mu(x,\hat{z})$, the latter term is upper bounded by
\begin{align*}
P(B(x,z^*) > \epsilon/4L) \to 0.
\end{align*}
\end{proof}

\begin{proof}[Proof of Example 1]
First we consider the case that the zero variance action has cost 0, and the other actions have cost 1 (call this event $A$). Because the cost of the optimal action is 0 and the cost of a suboptimal action is 1, the expected regret in this problem equals the probability of the algorithm selecting a suboptimal action. Noting that $\hat{\mu}(j) \sim \mathcal{N}(1,1/m)$ for $j = 1,\ldots,m$, we can express the expected regret of the predicted cost minimization algorithm as
\begin{align*}
\E[R^{PCM}|A] = P\left(\hat{\mu}(j) < 0 \text{ for some } j \in \{1,\ldots,m\} | A\right) = P\left(\max_j W_j > \sqrt{m}\right),
\end{align*}
where $W_1,\ldots,W_m$ are i.i.d. standard normal random variables. Similarly, the expected regret of the uncertainty penalized algorithm can be expressed as
\begin{align*}
\E[R^{UP}|A] &= P\left(\hat{\mu}(j) < -\frac{\lambda \sqrt{\ln m}}{\sqrt{m}} \text{ for some } j \in \{1,\ldots,m\} \bigg| A\right) \\
&=  P\left(\max_j W_j > \sqrt{m} + \lambda\sqrt{\ln m}\right)
\end{align*}
We can construct an upper bound on $\E R^{UP}$ with the union bound and a concentration inequality (as in the proof of Theorem 1). Applying the Gaussian tail inequality (see, for example, \citet[Proposition 2.1.2]{vershynin2016}), we have
\begin{align*}
\E[R^{UP}|A] &\le m P(W_1 > \sqrt{m} + \lambda\sqrt{\ln m}) \\
&\le \frac{\sqrt{m}}{\sqrt{2\pi}} \exp\left(-\frac{1}{2}(\sqrt{m} + \lambda\sqrt{\ln m})^2\right) \\
&= \frac{\sqrt{m}}{m^{\lambda^2/2}\sqrt{2\pi}} \exp(-m/2)\exp(-\lambda\sqrt{m\ln m}) \\
&\le \frac{1}{\sqrt{m}\sqrt{2\pi}} e^{-m/2},
\end{align*}
where we have used the assumption $\lambda \ge \sqrt{2}$.

To lower bound the expected regret of the predicted cost minimization algorithm, we can use a similar Gaussian tail inequality.
\begin{align*}
\E[R^{PCM}|A] &= 1 - \left[1 - P(W_1 > \sqrt{m})\right]^m \\
&\ge 1 - \left[1 - \left(1 - \frac{1}{m}\right)\frac{1}{\sqrt{m}\sqrt{2\pi}} e^{-m/2}\right]^m \\
&\ge 1 - \left[1 - \frac{1}{2\sqrt{m}\sqrt{2\pi}} e^{-m/2}\right]^m \\
&\ge 1 - \left[\left[1 - \frac{1}{2\sqrt{m}\sqrt{2\pi}} e^{-m/2}\right]^{2\sqrt{2\pi m}\exp(m/2)}\right]^{\sqrt{m}\exp(-m/2)/2\sqrt{2\pi}},
\end{align*}
where the second inequality is valid for all $m \ge 2$. One can verify that $(1-1/n)^n$ is a monotonically increasing function that converges to $e^{-1}$. Therefore, for all $m \ge 2$,
\begin{align*}
\E[R^{PCM}|A] &\ge 1 - \exp\left(-\frac{\sqrt{m}}{2\sqrt{2\pi}} \exp(-m/2)\right).
\end{align*}
Next, we use these bounds to compute the ratio $\E[R^{UP}|A]/\E[R^{PCM}|A]$ in the limit as $m\to\infty$.
\begin{align*}
\frac{\E[R^{UP}|A]}{\E[R^{PCM}|A]} &\le \frac{\frac{1}{\sqrt{m}\sqrt{2\pi}}e^{-m/2}}{1-\exp\left(-\frac{\sqrt{m}}{2\sqrt{2\pi}} \exp(-m/2)\right)}.
\end{align*}
Applying L'Hopital's rule, the limit of the right hand side is equal to the limit of
\begin{align*}
&\frac{2(2\pi)^{-1/2}\left(-m^{-3/2}e^{-m/2} - m^{-1/2}e^{-m/2}\right)}{(2\pi)^{-1/2}\left[m^{-1/2}e^{-m/2} - m^{1/2}e^{-m/2}\right]\exp\left(-\frac{\sqrt{m}}{2\sqrt{2\pi}} e^{-m/2}\right)} \\
&= 2\frac{-1-m}{m-m^{2}}\cdot \exp\left(\frac{\sqrt{m}}{2\sqrt{2\pi}} e^{-m/2}\right) \to 0.
\end{align*}

Next, we consider the case that the zero variance action has cost 1, and the other actions have cost 0. The expected regret equals the probability that the zero variance action is selected. For sufficiently large $m$,
\begin{align*}
\E[R^{UP}|A^c] &= P\left(\hat{\mu}(j) > 1 - \frac{\lambda\sqrt{\ln m}}{\sqrt{m}} \;\;\;\; \forall j \in \{1,\ldots,m\}\bigg| A^c\right) \\ 
&\le P(W_1 > \sqrt{m} - \lambda\sqrt{\ln m})^m \\
&\le P(W_1 > \sqrt{m}/2)^m \\
&\le \left(\frac{2}{\sqrt{2\pi}} e^{-m/8}\right)^m \\
&\le e^{-m^2/8} = o(\E[R^{UP}|A]).
\end{align*}
Therefore, for sufficiently large $m$ and some constant $C$,
\begin{align*}
\frac{\E[R^{UP}]}{\E[R^{PCM}]} &= \frac{\E[R^{UP}|A] + \E[R^{UP}|A^c]}{\E[R^{PCM}|A] + \E[R^{PCM}|A^c]} \\
&\le  \frac{\E[R^{UP}|A] + \E[R^{UP}|A^c]}{\E[R^{PCM}|A]} \\
&\le (1+C)\frac{\E[R^{UP}|A]}{\E[R^{PCM}|A]} \to 0.
\end{align*}
\end{proof}

\section{Optimization with Linear Predictive Models}
Here, we detail the optimization of (2) with linear predictive models. We focus on the case that $c(z;Y) = Y$ for simplicity. For these models, we posit the outcome is a linear function of the auxiliary covariates and decision. That is there exists a $\beta$ such that, given $X=x$, $Y(z) = (x,z)^T\beta + \epsilon$, where $\epsilon$ is a mean 0 subgaussian noise term with variance $\sigma^2$. If we let $A$ denote the design matrix for the problem, a matrix with rows consisting of $(X_i,Z_i)$ for $i=1,\ldots,n$, then the ordinary least squares (OLS) estimator for $\beta$ is given by
\begin{equation*}
\hat{\beta}^{OLS} = (A^TA)^{-1}A^TY.
\end{equation*}
The ordinary least squares estimator is unbiased, so when solving (2), we set $\lambda_2=0$. The variance of $(x,z)^T\hat{\beta}^{OLS}$ is given by $\sigma^2 (x,z)^T (A^TA)^{-1}(x,z)$. $ (A^TA)^{-1}$ is a positive semidefinite matrix, so $\sqrt{V(x,z)}$ is convex. Therefore, (2) becomes
\begin{equation*}
\min_{z\in\mathcal{Z}}\;\;\; (x,z)^T\hat{\beta}^{OLS} + \lambda_1\sigma \sqrt{ (x,z)^T (A^TA)^{-1}(x,z)},
\end{equation*}
which is a second order conic optimization problem if $\mathcal{Z}$ is polyhedral and can be solved efficiently by commercial solvers. Even if $\mathcal{Z}$ is a mixed integer set, commercial solvers such as Gurobi \citep{gurobi} can still solve the problem for sizes of practical interest.

For regularized linear models such as ridge and lasso regression, we use a similar approach. Although these estimators are biased, we set $\lambda_2=0$ for computational reasons. The ridge estimator for $\beta$ has a similar form to the OLS estimator:
\begin{equation*}
\hat{\beta}^{Ridge} = (A^TA + \alpha I)^{-1}A^TY,
\end{equation*}
for some $\alpha \ge 0$. The resulting optimization problem is essentially the same as with the OLS estimator. The lasso estimator does not have a closed form solution, but we can approximate it as in \citet{tibshirani1996}:
\begin{equation*}
P\hat{\beta}^{Lasso} \approx (PA^TAP^T + \alpha PW)^{-1}PA^TY,
\end{equation*}
where $W = \text{diag}(1/|\beta_1^*|,\ldots,1/|\beta_{d+p}^*|)$, $\beta^*$ is the true lasso solution, and $P$ is a projection matrix that projects to the nonzero components of $\beta^*$. (The zero components of $\beta^*$ are still 0 in the approximation.) With this approximation, the resulting optimization takes the same form as those for the OLS and ridge estimators.

\section{Data Generation}\label{ap:data}

\subsection{Pricing}
For our synthetic pricing example, we consider a store offering 5 products. We generate auxiliary covariates, $X_i$, from a $\mathcal{N}(10,1)$ distribution. We generate historical prices,$Z_i$, from a Gaussian distribution,
\begin{equation*}
\mathcal{N}\left(X_i^T\begin{pmatrix}
1 & 0\\
1 & 0\\
0 & 1\\
0 & 1\\
0.5 & 0.5
\end{pmatrix},100I\right).
\end{equation*}
We compute the expected demand for each product as:
\begin{equation*}
\mu = 
\begin{pmatrix}
500 - (Z^1_i)^2/10 - X_i^1 \cdot Z_i^1 / 10 - (X_i^1)^2 / 10 - Z_i^2 \\
500 - (Z^2_i)^2/10 - X_i^1 \cdot Z_i^2 / 10 - (X_i^1)^2 / 10 - Z_i^1 \\
500 - (Z^3_i)^2/10 - X_i^2 \cdot Z_i^3 / 10 - (X_i^2)^2 / 10 + Z_i^1 + Z_i^2 \\
500 - (Z^4_i)^2/10 - X_i^2 \cdot Z_i^4 / 10 - (X_i^2)^2 / 10 + Z_i^1 + Z_i^2 \\
500 - (Z^5_i)^2/10 - X_i^2 \cdot Z_i^5 / 20 - X_i^1 \cdot Z_i^5 / 20  - (X_i^2)^2 / 10
\end{pmatrix},
\end{equation*}
 and generate $Y_i$ from a $\mathcal{N}(\mu,2500I)$ distribution. This example serves to simulate the situation in which some products are complements and some are substitutes.

\subsection{Warfarin Dosing}
To simulate how physicians might assign Warfarin doses to patients, we compute a normalized BMI for each patient (i.e. body mass divided by height squared, normalized by the population standard deviation of BMI). For each patient, we then sample a dose (in mg/week), $Z_i$, from
\begin{equation*}
Z_i \sim \mathcal{N}(30 + 15 \cdot \text{BMI}_i, 64).
\end{equation*}
If $Z_i$ is negative, we assign a dose drawn uniformly from $[0,20]$. If the data dose not contain the patients height and/or weight, we assign a dose drawn uniformly from $[10,50]$, a standard range for Warfarin doses.

To simulate the response that a physician observes for a particular patient, we compute the difference between the the assigned dose and the true optimal dose for that patient, $Z^*_i$, and add noise. We then cap the response so it is less than or equal to 40 in absolute value. The reasoning behind this construction is that the INR measurement gives the physician some idea of whether the assigned dose is too high or too low and whether it is close to the optimal dose. However, if the dose is very far from optimal, then the information INR provides is not very useful in determining the optimal dose (it is purely directional). The response of patient $i$ is given by
\begin{equation*}
Y_i = \begin{cases}
-40, & R_i < -40 \\
R_i, & -40 \le R_i \le 40 \\
40, & R_i > 40
\end{cases},
\end{equation*}
where $R_i \sim \mathcal{N}(Z_i - Z^*_i,400)$.

\section{Sensitivity to Selection of Tuning Parameters}
To test the sensitivity of the method to the selection of tuning parameters, we conduct an experiment on the Warfarin example with the random forest as the base learner. We compute the out-of-sample error for many combinations of $\lambda_1$ and $\lambda_2$. From Figure \ref{fig:tuning}, we see that the out-of-sample performance is not too sensitive to the selection of parameters. All of the selected parameter combinations out-perform the unpenalized method with the exception of $(\lambda_1=100,\lambda_2=0)$, which is an extreme choice. This demonstrates that the tuning parameter selection does not have to be extremely precise to improve performance.

\begin{figure}[t]
\centering
\includegraphics[width=0.7\linewidth,trim={0 0cm 0 0cm},clip]{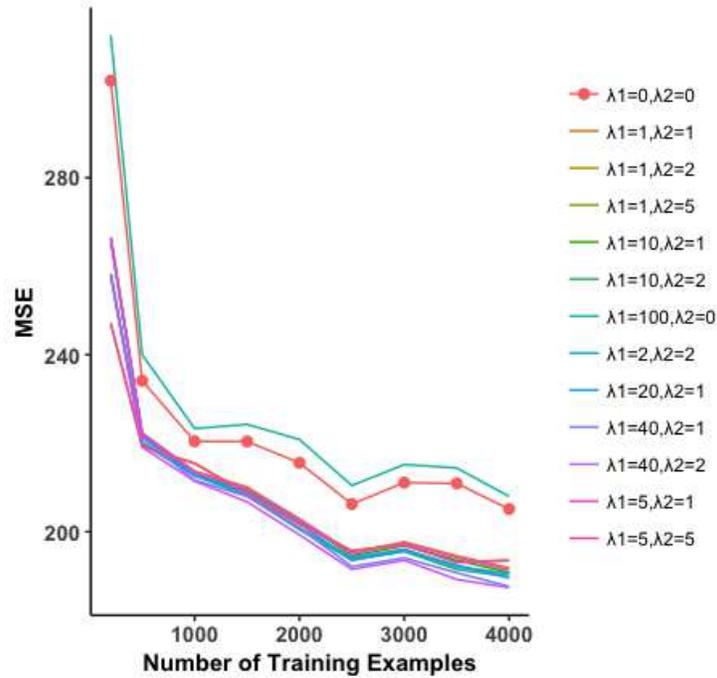}
    \caption{Effect of $\lambda$ tuning on RF method, Warfarin example}\label{fig:tuning}
\end{figure}

\end{document}